\newtheorem{theorem}{Theorem}[section]
\newtheorem{lemma}[theorem]{Lemma}
 \newenvironment{definition}[1][Definition]{\begin{trivlist}
 \item[\hskip \labelsep {\bfseries #1}]}{\end{trivlist}}
\DeclareMathOperator*{\Exp}{\mathbb{E}}
\DeclareMathOperator*{\Prob}{\mathbb{P}}
\DeclareMathOperator*{\argmax}{argmax}
\title{Reflective Oracles: A Foundation for Classical Game Theory}
\author{
Benja Fallenstein \and Jessica Taylor \\
Machine Intelligence Research Institute \\
\{benja,jessica\}@intelligence.org\\
\And
Paul F.\ Christiano\\
UC Berkeley\\
paulfchristiano@eecs.berkeley.edu
}
\begin{document}

\publishingnote{This is an extended version of \cite{Fallenstein:2015a}.}

\maketitle

\begin{abstract}
Classical game theory treats players as special---a description of a game contains a full, explicit enumeration of all players---even though in the real world, ``players'' are no more fundamentally special than rocks or clouds. It isn't trivial to find a decision-theoretic foundation for game theory in which an agent's coplayers are a non-distinguished part of the agent's environment. Attempts to model both players and the environment as Turing machines, for example, fail for standard diagonalization reasons.
    
In this paper, we introduce a ``reflective'' type of oracle, which is able to answer questions about the outputs of oracle machines with access to the same oracle. These oracles avoid diagonalization by answering some queries randomly. We show that machines with access to a reflective oracle can be used to define rational agents using causal decision theory. These agents model their environment as a probabilistic oracle machine, which may contain other agents as a non-distinguished part.
    
We show that if such agents interact, they will play a Nash equilibrium, with the randomization in mixed strategies coming from the randomization in the oracle's answers. This can be seen as providing a foundation for classical game theory in which players aren't special.
\end{abstract}

  \section{Introduction}

    Classical decision theory and game theory are founded on the notion of a perfect Bayesian reasoner \cite{Myerson:1997}. Such an agent may be uncertain which of several possible worlds describes the state of its environment, but given any particular possible world, it is able to deduce exactly what outcome each of its available actions will produce \cite{Savage:1972}. This assumption is, of course, unrealistic \cite{Gigerenzer:2001,Gaifman:2004}: Agents in the real world must necessarily be boundedly rational reasoners, which make decisions with finite computational resources. Nevertheless, the notion of a perfect Bayesian reasoner provides an analytically tractable first approximation to the behavior of real-world agents, and underlies an enormous body of work in statistics~\cite{Box:2011}, economics~\cite{Kreps:1990}, computer science~\cite{Korb:2003}, and other fields.
    
    On closer examination, however, the assumption that agents can compute what outcome each of their actions leads to in every possible world is troublesome even if we assume that agents have unbounded computing power. For example, consider the game of \emph{Matching Pennies}, in which two players each choose between two actions (``heads'' and ``tails''); if the players choose the same action, the first player wins a dollar, if they choose differently, the second player wins. Suppose further that both players' decision-making processes are Turing machines with unlimited computing power. Finally, suppose that both players know the exact state of the universe at the time they begin deliberating about the actions they are going to choose, including the source code of their opponent's decision-making algorithm.\footnote{The technique of quining (Kleene's second recursion theorem \cite{Rogers:1967}) shows that it is possible to write two programs that have access to each other's source code.}
    
    In this set-up, by assumption, both agents know exactly which possible world they are in. Suppose that they are able to use this information to accurately predict their opponent's behavior. Since both players' decision-making processes are deterministic Turing machines, their behavior is deterministic given the initial state of the world; each player either definitely plays ``heads'' or definitely plays ``tails''. But neither of these possibilities is consistent: For example, if the first player chooses heads and the second player can predict this, the second player will choose tails, but if the first player can predict this in turn, it will choose tails, contradicting the assumption that it chooses heads.
    
    The problem is caused by the assumption that given its opponent's source code, a player can figure out what action the opponent will choose. One might think that it could simply run its opponent's source code, but if the opponent does the same, both programs will go into an infinite loop. Binmore~\cite{Binmore:1987}, discussing the philosophical justification for game-theoretic concepts such as Nash equilibrium, puts this problem as follows:
    \begin{quote}
    In any case, if Turing machines are used to model the players, it is possible to suppose that the play of a game is prefixed by an exchange of the players' G\"odel numbers\dots Within this framework, a perfectly rational machine ought presumably to be able to predict the behavior of the opposing machines perfectly, since it will be familiar with every detail of their design.  And a universal Turing machine \emph{can} do this.  What it \emph{cannot} do is predict its opponents' behavior perfectly \emph{and} simultaneously participate in the action of the game.  It is in this sense that the claim that perfect rationality is an unattainable ideal is to be understood.
    \end{quote}
    Even giving the players access to a halting oracle does not help, because even though a machine with access to a halting oracle can predict the behavior of an ordinary Turing machine, it cannot in general predict the behavior of another oracle machine.
    
    Classical game theory resolves this problem by allowing players to choose \emph{mixed} strategies (probability distributions over actions); for example, the unique Nash equilibrium of Matching Pennies is for each player to assign ``heads'' and ``tails'' probability~$0.5$ each. However, instead of treating players' decision-making algorithms as computable processes which are an ordinary part of a world with computable laws of physics, classical game theory treats players as special objects. For example, to describe a problem in game-theoretic terms, we must provide an explicit list of all relevant players, even though in the real world, ``players'' are ordinary physical objects, not fundamentally distinct from objects such as rocks or clouds.
    
    In this paper, we show that it is possible to define a certain kind of \emph{probabilistic} oracle---that is, an oracle which answers some queries non-deterministically---such that a Turing machine with access to this oracle can perform perfect Bayesian reasoning about environments that can themselves be described as oracle machines with access to the same oracle. This makes it possible for players to treat opponents simply as an ordinary part of this environment.
    
    When an environment contains multiple agents playing a game against each other, the probabilistic behavior of the oracle may cause the players' behavior to be probabilistic as well. We show that in this case, the players will always play a Nash equilibrium, and for every particular Nash equilibrium there is an oracle that causes the players to behave according to this equilibrium.  In this sense, our work can be seen as providing a foundation for classical game theory, demonstrating that the special treatment of players in the classical theory is not fundamental.

    The oracles we consider are not halting oracles; instead, roughly speaking, they allow oracle machines with access to such an oracle to determine the probability distribution of outputs of other machines with access to the same oracle. Because of their ability to deal with self-reference, we refer to these oracles as \emph{reflective oracles}.

  \section{Reflective Oracles} \label{section:reflective-oracles}

  In many situations, programs would like to predict the output of
  other programs.  They could simulate the other program in order to
  do this.  However, this method fails when there are cycles
  (e.g. program A is concerned with the output of program B which is
  concerned with the output of program A).  Furthermore, if a
  procedure to determine the output of another program existed, then
  it would be possible to construct a liar's paradox of the form ``if
  I return 1, then return 0, otherwise return 1''.

  These paradoxes can be resolved by using probabilities.  Let $\mathcal{M}$ be the set of \emph{probabilistic oracle machines}, defined here as Turing machines which can execute special instructions to (i)~flip a coin that has an arbitrary rational probability of coming up heads, and to (ii)~call an oracle $O$, whose behavior might itself be probabilistic.
  
  Roughly speaking, the oracle answers questions of the form: ``Is the probability that machine $M$ returns $1$ greater than $p$?'' Thus, $O$ takes two inputs, a machine $M\in\mathcal{M}$ and a rational probability $p\in[0,1]\cap\mathbb{Q}$, and returns either~$0$ or~$1$. If $M$ is guaranteed to halt and to output either~$0$ or~$1$ itself, we want $O(M,p) = 1$ to mean that the probability that $M$ returns $1$ (when run with $O$) is at least $p$, and $O(M,p) = 0$ to mean that it is at most $p$; if it is equal to $p$, both conditions are true, and the oracle may answer randomly. In summary,
  $$\begin{aligned}\Prob(M^O() = 1) \;>\; p \;\;\Longrightarrow\;\; \Prob(O(M, p) = 1) \;=\; 1\\\Prob(M^O() = 1) \;<\; p \;\;\Longrightarrow\;\; \Prob(O(M, p) = 0) \;=\; 1\end{aligned}$$
  where we write $\Prob(M^O() = 1)$ for the probability that $M$ returns $1$ when run with oracle $O$, and $\Prob(O(M,p) = 1)$ for the probability that the oracle returns $1$ on input $(M,p)$. We assume that different calls to the oracle are stochastically independent events (even if they are about the same pair $(M,p)$); hence, the behavior of an oracle $O$ is fully specified by the probabilities $\Prob(O(M,p) = 1)$.

  \begin{definition}
  A \emph{query} (with respect to a particular oracle~$O$) is a pair $(M,p)$, where $p\in[0,1]\cap\mathbb{Q}$ and $M^O()$ is a probabilistic oracle machine which almost surely halts and returns an element of~$\{0,1\}$.
  \end{definition}
  
  \begin{definition}
  An oracle is called \emph{reflective on $R$}, where $R$ is a set of queries, if it satisfies the two conditions displayed above for every $(M,p)\in R$. It is called \emph{reflective} if it is reflective on the set of all queries.
  \end{definition}
  
  \begin{theorem} \label{theorem:existence}
  
    (i) There is a reflective oracle.
    
    (ii) For any oracle~$O$ and every set of queries~$R$, there is an oracle~$O'$ which is reflective on~$R$ and satisfies $\Prob(O'(M,p) = 1) = \Prob(O(M,p) = 1)$ for all $(M,p)\notin R$.
  \end{theorem}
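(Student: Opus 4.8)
The plan is to establish the stronger claim~(ii) by a fixed-point argument and to read off~(i) as the special case in which $O$ is arbitrary and $R$ is the set of all pairs. Since Turing machines and rationals in $[0,1]$ each form a countable set, the pairs $(M,p)$ constitute a countable set $\mathcal{Q}$, and an oracle is determined entirely by the real vector $\big(\Prob(O(M,p)=1)\big)_{(M,p)\in\mathcal{Q}}$. I would accordingly identify the collection of all oracles with the cube $X=[0,1]^{\mathcal{Q}}$, which is nonempty, convex, and, by Tychonoff's theorem, compact in the product topology; it sits as a compact convex subset of the locally convex space $\mathbb{R}^{\mathcal{Q}}$, so that the Kakutani--Fan--Glicksberg fixed-point theorem will be available.

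The crucial preliminary is to control how the output probabilities depend on the oracle. For a candidate oracle $A\in X$, write $q_M(A)=\Prob(M^A()=1)$ and $r_M(A)=\Prob(M^A()=0)$ for the probabilities that $M^A$ halts with output $1$ and with output $0$, respectively. Running $M^A$ for at most $n$ steps explores a finite binary computation tree---each coin flip and each oracle answer branches in two---so only finitely many coordinates of $A$ are consulted, and the probability $a_n(A)$ of halting with output $1$ within $n$ steps is a polynomial in those coordinates, hence continuous on $X$. As $a_n(A)\uparrow q_M(A)$, the limit $q_M$ is lower semicontinuous, and by the same argument so is $r_M$. Consequently the sets $\{A:q_M(A)>p\}$ and $\{A:r_M(A)>1-p\}$ are open, and since $q_M+r_M\le 1$ they are disjoint.

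I would then define a correspondence $\Phi\colon X\rightrightarrows X$ coordinatewise, whose fixed point is the desired $O'$. For $(M,p)\notin R$ set $\Phi_{(M,p)}(A)=\{\Prob(O(M,p)=1)\}$, pinning the answer to the given oracle. For $(M,p)\in R$ set $\Phi_{(M,p)}(A)=\{1\}$ if $q_M(A)>p$, $\;=\{0\}$ if $r_M(A)>1-p$, and $\;=[0,1]$ otherwise. Each value is nonempty, convex and closed. The graph is closed: off $R$ the map is constant; on $R$, if $q_M(A)>p$ then by openness this inequality persists along any sequence converging to $A$, forcing the coordinate to $1$ in the limit, symmetrically for $r_M>1-p$, while on the complementary closed region every value in $[0,1]$ is admissible. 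Upper hemicontinuity follows from the closed graph together with compactness, so Kakutani--Fan--Glicksberg yields a fixed point $O'\in\Phi(O')$.

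It remains to check that $O'$ has the stated properties. Off $R$ we get $\Prob(O'(M,p)=1)=\Prob(O(M,p)=1)$ directly. On $R$, if $(M,p)$ is genuinely a query---so $M^{O'}$ almost surely halts with a $\{0,1\}$ output and hence $q_M(O')+r_M(O')=1$---then $q_M(O')>p$ forces the answer $1$, whereas $q_M(O')<p$ gives $r_M(O')>1-p$ and forces the answer $0$; these are exactly the two reflective conditions. Part~(i) is the case $R=\mathcal{Q}$ with $O$ arbitrary, since forcing the conditions at every pair makes the fixed point reflective on every genuine query. The step I expect to be most delicate is precisely this interaction between halting and continuity: $q_M$ is in general only lower semicontinuous (a machine may fail to halt on a set that only vanishes in the limit), so one cannot force the oracle's answer by comparing a single continuous function to $p$. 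Imposing the two one-sided conditions $q_M>p$ and $r_M>1-p$ simultaneously is what both secures a closed graph and causes reflectivity to hold automatically on exactly the almost-surely-halting queries.
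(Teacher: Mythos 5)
Your proof is correct, and it reaches the same strengthened conclusion as the paper (the fixed point satisfies $\Prob(M^{O'}()=1)>p\Rightarrow\Prob(O'(M,p)=1)=1$ and $\Prob(M^{O'}()=0)>1-p\Rightarrow\Prob(O'(M,p)=0)=1$ for \emph{all} machines, which reduces to reflectivity exactly on the almost-surely-halting queries). The overall architecture matches the paper's Appendix B---a Kakutani--Fan--Glicksberg fixed point of a correspondence on a product of intervals---but the realization of the correspondence is genuinely different. The paper does not work on the oracle cube alone: it enlarges the state space with an auxiliary coordinate $\mathrm{eval}(M)$ for every machine, constrains $\mathrm{eval}$ only by \emph{one-step} recursion equations (deterministic step, coin flip, halt, oracle call), and constrains the oracle coordinates by comparing $\mathrm{eval}(M)$ to $p$; the sandwich $\Prob(M^{O'}()=1)\le\mathrm{eval}(M)\le\Prob(M^{O'}()\neq 0)$ is then recovered at the fixed point by an induction over time steps. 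You instead eliminate the auxiliary coordinates and prove the key topological fact directly: the maps $A\mapsto\Prob(M^A()=1)$ and $A\mapsto\Prob(M^A()=0)$ are lower semicontinuous on $[0,1]^{\mathcal{Q}}$ (increasing limits of the finite-horizon polynomials), so the trigger sets $\{q_M>p\}$ and $\{r_M>1-p\}$ are open and disjoint, which is exactly what makes the coordinatewise correspondence have closed graph. Your route is leaner and makes the analytic content (semicontinuity, not continuity, of halting probabilities) explicit; the paper's route buys a closed-graph verification that is essentially trivial---every condition is an equation or a pointwise-convergent inequality in finitely many coordinates---at the cost of carrying the extra $\mathrm{eval}$ coordinates and the inductive sandwich lemma. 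Both correctly identify the same delicate point you flag at the end: one cannot compare a single continuous function to $p$, so the two one-sided conditions must be imposed separately.
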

  \begin{proof}
  For the proof of (ii), see Appendix~\ref{appendix:existence-proof}; see also Theorem~\ref{theorem:existence-from-nash-equilibria}, which gives a more elementary proof of a special case. Part (i) follows from part (ii) by choosing~$R$ to be the set of all queries and letting~$O$ be arbitrary.
  \end{proof}
  
  As an example, consider the machine given by $M^O() = 1 - O(M,0.5)$, which implements a version of the liar paradox by asking the oracle what it will return and then returning the opposite.  By the existence theorem, there is an oracle which is reflective on $R = \{(M,0.5)\}$. This is no contradiction: We can set $\Prob(O(M, 0.5) = 1) = \Prob(O(M, 0.5) = 0) = 0.5$, leading the program to output~1 half the time and~0 the other half of the time.

  \section{From Reflective Oracles to Causal Decision Theory}
  
  We now show how reflective oracles can be used to implement a perfect Bayesian reasoner. We assume that each possible environment that this agent might find itself in can likewise be modeled as an oracle machine; that is, we assume that the laws of physics are computable by a probabilistic Turing machine with access to the same reflective oracle as the agent. For example, we might imagine our agent as being embedded in a Turing-complete probabilistic cellular automaton, whose laws are specified in terms of the oracle.
  
  We assume that each of the agent's hypotheses about which environments it finds itself in can be modeled by a (possibly probabilistic) ``world program''~$H^O()$, which simulates this environment and returns a description of what happened. We can then define a machine $W^O()$ which samples a hypothesis~$H$ according to the agent's probability distribution and runs $H^O()$. In the sequel, we will talk about~$W^O()$ as if it refers to a particular environment, but this machine is assumed to incorporate subjective uncertainty about the laws of physics and the initial state of the world.
  
  We further assume that the agent's decision-making process, $A^O()$, can be modeled as a probabilistic oracle machine embedded in this environment. As a simple example, consider the world program
  $$W^O() = \begin{cases}
    \$20 & \text{if } A^O() = 0 \\
    \$15 & \text{otherwise}
  \end{cases}
  $$
  In this world, the outcome is \$20 (which in this case means the agent
  receives \$20) if the agent chooses action 0 and \$15 if
  the agent chooses action 1.
  
  Our task is to find an appropriate implementation of $A^O()$. Here, we consider agents implementing causal decision theory (CDT) \cite{Weirich:2012}, which evaluates actions according to the consequences they cause: For example, if the agent is a robot embedded in a cellular automaton, it might evaluate the expected utility of taking action~$0$ or~$1$ by simulating what would happen in the environment if the output signal of its decision-making component were replaced by either~$0$ or~$1$.
  
  We will assume that the agent's model of the counterfactual consequences of taking different actions~$a$ is described by a machine $W_A^O(a)$, satisfying $W^O() = W_A^O(A^O())$ since in the real world, the agent takes action $a = A^O()$. In our example,
  $$W_A^O(a) = \begin{cases}
    \$20 & \text{if } a = 0 \\
    \$15 & \text{otherwise}
  \end{cases}
  $$
  We assume that the agent has a utility function over outcomes,~$u(\cdot)$, implemented as a lookup table, which takes rational values in~$[0,1]$.\footnote{Since the meaning of utility functions is invariant under affine transformations, the choice of the particular interval $[0,1]$ is no restriction.} Furthermore, we assume that both $W_A^O(0)$ and $W_A^O(1)$ halt almost surely and return a value in the domain of~$u(\cdot)$. Causal decision theory then prescribes choosing the action that maximizes expected utility; in other words, we want to find an~$A^O()$ such that
  $$A^O() \,=\, \argmax_{a}\, \mathbb{E}\left[u\left(W^O_A(a)\right)\right]$$
  In the case of ties, any action maximizing utility is allowed, and it is acceptable for $A^O()$ to randomize.
  
  We cannot compute this expectation by simply running~$u(W_A^O(a))$ many times to obtain samples, since the environment might contain other agents of the same type, potentially leading to infinite loops. However, we can find an optimal action by making use of a reflective oracle. This is easiest when the agent has only two actions ($0$ and $1$), but similar analysis extends to any number of actions. Define a machine
  $$E^O() :=
  \text{flip}\left(\frac{u(W_A^O(1)) - u(W_A^O(0)) + 1}{2}\right)$$ where
  $\text{flip}(p)$ is a probabilistic function that returns 1 with probability
  $p$ and 0 with probability $1 - p$.  
  \begin{theorem} \label{theorem:reflective-iff-utility-maximizing}
    $O$ is reflective on $\{(E,1/2)\}$ if and only if $A^O() := O(E, 1/2)$ returns a utility-maximizing action.
  \end{theorem}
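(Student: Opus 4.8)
The plan is to unwind both the definition of reflectivity on the single query $(E,1/2)$ and the definition of a utility-maximizing action, and to observe that the comparison of $\Prob(E^O()=1)$ with $1/2$ is exactly the comparison of the two actions' expected utilities. First I would check that $(E,1/2)$ is a genuine query, so that the reflectivity conditions are even well-posed: since $W_A^O(0)$ and $W_A^O(1)$ halt almost surely and return values in the domain of~$u$, and $u$ is rational-valued in $[0,1]$, the argument of $\text{flip}$ lies in $[0,1]\cap\mathbb{Q}$, and $E^O()$ almost surely halts with an output in $\{0,1\}$.

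The key computation is to evaluate $\Prob(E^O()=1)$. Writing $U_a := \Exp[u(W_A^O(a))]$ for $a\in\{0,1\}$, and using that $E^O()$ runs $W_A^O(0)$ and $W_A^O(1)$ once each and then flips a coin whose bias is the displayed affine function of their utilities, linearity of expectation gives
$$\Prob(E^O()=1) \;=\; \frac{U_1 - U_0 + 1}{2}.$$
Hence $\Prob(E^O()=1)$ being $>1/2$, $=1/2$, or $<1/2$ corresponds exactly to $U_1 > U_0$, $U_1 = U_0$, or $U_1 < U_0$; that is, to action~$1$ being the unique maximizer, to a tie, or to action~$0$ being the unique maximizer.

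With this in hand, both directions reduce to a case analysis over the sign of $U_1 - U_0$. For the forward direction, assuming $O$ is reflective on $(E,1/2)$: if $U_1 > U_0$, the first reflectivity condition forces $\Prob(O(E,1/2)=1)=1$, so $A^O()$ almost surely outputs the unique maximizer~$1$; symmetrically if $U_1 < U_0$; and if $U_1 = U_0$, every action maximizes utility, so any (possibly randomized) output of $O(E,1/2)$ is acceptable. For the converse, assuming $A^O()=O(E,1/2)$ almost surely returns a maximizing action: when $\Prob(E^O()=1)>1/2$ the unique maximizer is~$1$, so $A$ must output~$1$ almost surely, which is the first reflectivity condition, and symmetrically for the second; the tie case imposes no constraint on either side. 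Thus the two properties coincide.

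The hard part will be nothing deeper than bookkeeping around the tie case $U_1 = U_0$ together with pinning down the precise reading of ``returns a utility-maximizing action'' for a possibly randomizing~$A$ --- namely that it means ``outputs an element of $\argmax_a U_a$ with probability~$1$.'' Once this reading is fixed, the reflectivity conditions and the maximization conditions become literally the same pair of implications, the tie case drops out as an unconstrained case on both sides, and the biconditional follows.
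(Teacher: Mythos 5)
Your proposal is correct and follows essentially the same route as the paper's proof: compute $\Prob(E^O()=1) = \tfrac{U_1-U_0+1}{2}$, observe that comparing it with $1/2$ is the same as comparing the two expected utilities, and note that the reflectivity conditions and the utility-maximization conditions then become literally the same pair of implications, with the tie case unconstrained on both sides. Your additional checks (that $(E,1/2)$ is a well-posed query, and the explicit reading of ``returns a utility-maximizing action'' for a randomizing agent) are sensible elaborations of points the paper leaves implicit.
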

  \begin{proof} The demand that $A^O()$ return a utility-maxmizing action is equivalent to
  $$\begin{aligned}\Exp[u(W_A^O(1))] > \Exp[u(W_A^O(0))] \;\Longrightarrow\; A^O() = 1\\
    \Exp[u(W_A^O(1))] < \Exp[u(W_A^O(0))] \;\Longrightarrow\; A^O() = 0\end{aligned}$$
  We have
    $$\Prob(E^O() = 1) = \Exp\left[\frac{u(W_A^O(1)) - u(W_A^O(0)) + 1}{2}\right]$$
    It is not difficult to check that $\Exp[u(W_A^O(1))] \gtrless \Exp[u(W_a^O(0))]$ iff $\Prob(E^O() = 1) \gtrless 1/2$. Together with the definition of $A^O()$, we can use this to rewrite the above conditions as
  $$\begin{aligned}\Prob(E^O() = 1) > 1/2 \;\Longrightarrow\; O(E,1/2) = 1\\
    \Prob(E^O() = 1) < 1/2 \;\Longrightarrow\; O(E,1/2) = 0\end{aligned}$$
    But this is precisely the definition of ``$O$ is reflective on $\{(E,1/2)\}$''.
  \end{proof}
  
  In order to handle agents which can choose between more than two actions, we can compare action 0 to action 1, then compare action 2 to the best of actions 0 and 1, then compare action 3 to the best of the first three actions, and so on. Adding more actions in this fashion does not substantially change the analysis.

  \section{From Causal Decision Theory to Nash Equilibria}
  
  Since we have taken care to define our agents' world models $W_A^O(a)$ in such a way that they can embed other agents,\footnote{More precisely, we have only required that $W_A^O(a)$ always halt and produce a value in the domain of the utility function $u(\cdot)$. Since all our agents do is to perform a single oracle call, they always halt, making them safe to call from $W_A^O(a)$.} we need not do anything special to pass from single-agent to multi-agent settings.  
  As in the single-agent case, we model the environment by a program $W^O()$  that contains embedded agent programs $A_1^O, \dotsc, A_n^O$ and returns an outcome. We can make the dependency on the agent program explicit by writing $W^O() = F^O(A_1^O(),\dotsc,A_n^O())$ for some oracle machine $F^O(\cdots)$. This allows us to define machines $W_i^O(a_i) := F^O(a_i,A_{-i}^O()) := F(A_1^O(),\dotsc,A_{i-1}^O(),a_i,A_{i+1}^O(),\dotsc,A_n^O())$, representing the causal effects of player~$i$ taking action~$a_i$.
  
  We assume that each agent has a utility function~$u_i(\cdot)$ of the same type as in the previous subsection. Hence, we can define the agent programs~$A_i^O()$ just as before:
  \begin{align*}
    A_i^O() &= O(E_i, 1/2)
    \\
    E_i^O() &= \text{flip}\left(\frac{u_i(W_i^O(1)) - u_i(W_i^O(0)) + 1}{2} \right)
  \end{align*}
  Here, each $E_i^O()$ calls $W_i^O()$, which calls $A_j^O()$ for each $j\neq i$, which refers to the source code of $E_j^O()$, but again, Kleene's second recursion theorem shows that this kind of self-reference poses no theoretical problem~\cite{Rogers:1967}.
  
  This setup very much resembles the setting of normal-form games. In fact:
  \begin{theorem} \label{theorem:nash-equilibrium-condition}
    Given an oracle $O$, consider the $n$-player normal-form game in which the payoff of player~$i$, given the pure strategy profile $(a_1,\dotsc,a_n)$, is $\Exp[u_i(F^O(a_1,\dotsc,a_n))]$. The mixed strategy profile given by $s_i := \Prob(A_i^O() = 1)$ is a Nash equilibrium of this game if and only if $O$ is reflective on $\{(E_1,1/2),\dotsc,(E_n,1/2)\}$.
  \end{theorem}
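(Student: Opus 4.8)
The plan is to reduce the theorem to a per-player statement and then invoke the standard characterization of Nash equilibria as profiles of mutual best responses. First I would recall that a mixed strategy profile $(s_1,\dotsc,s_n)$ is a Nash equilibrium exactly when, for each player~$i$, the strategy $s_i$ is a best response to $s_{-i}$; and that a two-action mixed strategy $s_i = \Prob(A_i^O()=1)$ is a best response precisely when it randomizes only over payoff-maximizing actions, i.e.\ $s_i > 0$ forces action~$1$ to be weakly optimal and $s_i < 1$ forces action~$0$ to be weakly optimal. Since reflectivity on $\{(E_1,1/2),\dotsc,(E_n,1/2)\}$ is by definition the conjunction of reflectivity on each $(E_i,1/2)$, it suffices to prove, for each fixed~$i$, that $s_i$ is a best response to $s_{-i}$ if and only if $O$ is reflective on $(E_i,1/2)$.

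The crucial bridge between the game-theoretic and the oracle-theoretic pictures is the identity
$$\Exp[u_i(W_i^O(a_i))] \;=\; \sum_{a_{-i}} \Big(\prod_{j\neq i} s_j^{a_j}(1-s_j)^{1-a_j}\Big)\, \Exp[u_i(F^O(a_i,a_{-i}))],$$
which I would establish next. Because $W_i^O(a_i) = F^O(a_i,A_{-i}^O())$ and, by the standing assumption that distinct oracle calls are stochastically independent, the bits $A_j^O() = O(E_j,1/2)$ for $j\neq i$ are independent of one another and of the internal randomness of $F^O$, the distribution of $A_{-i}^O()$ is exactly the product mixed profile $s_{-i}$. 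Conditioning on the sampled $a_{-i}$ and taking the remaining expectation over $F$'s coins and oracle calls recovers the payoff entry $\Exp[u_i(F^O(a_i,a_{-i}))]$. Thus $\Exp[u_i(W_i^O(a_i))]$ is literally player~$i$'s expected payoff from pure action $a_i$ against $s_{-i}$, so action~$1$ is weakly optimal iff $\Exp[u_i(W_i^O(1))] \geq \Exp[u_i(W_i^O(0))]$, and symmetrically for action~$0$.

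With this identity in hand I would reuse the computation from Theorem~\ref{theorem:reflective-iff-utility-maximizing}: $\Prob(E_i^O()=1) \gtrless 1/2$ iff $\Exp[u_i(W_i^O(1))] \gtrless \Exp[u_i(W_i^O(0))]$, so that reflectivity on $(E_i,1/2)$ unpacks into the implications ``$\Exp[u_i(W_i^O(1))] > \Exp[u_i(W_i^O(0))] \Rightarrow s_i = 1$'' and ``$\Exp[u_i(W_i^O(1))] < \Exp[u_i(W_i^O(0))] \Rightarrow s_i = 0$.'' The final step is purely propositional: writing $d_i := \Exp[u_i(W_i^O(1))] - \Exp[u_i(W_i^O(0))]$, the best-response conditions ``$s_i > 0 \Rightarrow d_i \geq 0$'' and ``$s_i < 1 \Rightarrow d_i \leq 0$'' are the contrapositives of ``$d_i < 0 \Rightarrow s_i = 0$'' and ``$d_i > 0 \Rightarrow s_i = 1$,'' which is exactly the reflectivity condition; I would simply verify both directions of this equivalence. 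Taking the conjunction over all~$i$ then yields ``Nash equilibrium iff reflective on the whole set.''

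I expect the only genuinely delicate step to be the independence argument behind the displayed identity: one must be sure that sampling the coplayers' actions through their programs $A_{-i}^O()$ produces the same product distribution $s_{-i}$ that appears in the game's payoff definition, and that this sampling is independent of $F^O$'s own randomness. This is precisely where the assumption that different oracle calls are stochastically independent does its work; everything else is the two-action best-response bookkeeping already carried out in the single-agent case.
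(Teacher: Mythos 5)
Your proposal is correct and follows essentially the same route as the paper's own proof: decompose the Nash condition into per-player best responses, identify $\Exp[u_i(W_i^O(a_i))]$ with player $i$'s expected payoff from pure action $a_i$ against $s_{-i}$, and apply Theorem~\ref{theorem:reflective-iff-utility-maximizing} player by player. You merely make explicit two steps the paper leaves implicit---the independence argument showing $A_{-i}^O()$ is distributed as the product profile $s_{-i}$, and the contrapositive bookkeeping matching the best-response conditions to the reflectivity implications---which is a welcome elaboration rather than a different argument.
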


  \begin{proof}
  For $(s_1,\dotsc,s_n)$ to be a Nash equilibrium is equivalent to every player's mixed strategy being a best response; i.e., a pure strategy $a_i$ can only be assigned positive probability if it maximizes
  $$\Exp[u_i(F^O(a_i,A_{-i}^O()))] \;=\; \Exp[u_i(W_i^O(a_i))]$$
  By an application of Theorem~\ref{theorem:reflective-iff-utility-maximizing}, this is equivalent to $O$~being reflective on $\{(E_i,1/2)\}$.
  \end{proof}
  
  Note that, in particular, any normal-form game with rational-valued payoffs can be represented in this way by simply choosing $F^O$ to be the identity function. In this case, the theorem shows that every reflective oracle (which exists by Theorem~\ref{theorem:existence}) gives rise to a Nash equilibrium. In the other direction, Theorem~\ref{theorem:nash-equilibrium-condition} together with Theorem~\ref{theorem:existence}(ii) show that for any Nash equilibrium $(s_1,\dotsc,s_n)$ of the normal-form game, there is a reflective oracle such that $\Prob(A_i^O() = 1) = s_i$.

  \section{From Nash Equilibria to Reflective Oracles}

    In the previous section, we showed that a reflective oracle can be used to find Nash equilibria in arbitrary normal-form games.  It is interesting to note that we can also go in the other direction: For finite sets $R$ satisfying certain conditions, we can construct normal-form games $G_R$ such that the existence of oracles reflective on $R$ follows from the existence of Nash equilibria in $G_R$. This existence theorem is a special case of Theorem~\ref{theorem:existence}, but it not only provides a more elementary proof, but also provides a constructive way of finding such oracles (by applying any algorithm for finding Nash equilibria to $G_R$).
  
  \begin{definition}
    A set $R$ of queries is \emph{closed} if for every $(M,p)\in R$ and every oracle $O$, $M^O()$ is guaranteed to only invoke the oracle on pairs $(N,q)\in R$. It is \emph{bounded} if there is some bound $B_R\in\mathbb{N}$ such that for every $(M,p)\in R$ and every oracle $O$, $M^O()$ is guaranteed to invoke the oracle at most~$B_R$ times.
  \end{definition}
  
  \begin{definition}
    Given a finite set $R = \{(M_1,p_1),\dotsc,(M_n,p_n)\}$ and a vector $\vec x\in[0,1]^n$, define $O_{\vec x}$ to be the oracle satisfying $\Prob(O_{\vec x}(M_i,p_i) = 1) = x_i$ for $i = 1,\dotsc,n$, and $\Prob(O_{\vec x}(M,p) = 1) = 0$ for $(M,p)\notin R$.
  \end{definition}
  
  \begin{theorem} \label{theorem:existence-from-nash-equilibria}
    For any finite, closed, bounded set~$R = \{(M_1,p_1),\dotsc,(M_n,p_n)\}$, there is a normal form game $G_R$ with $m := n\cdot(2B_R + 1)$ players, each of which has two pure strategies, such that for any Nash equilibrium strategy profile $(s_1,\dotsc,s_m)$, the oracle $O_{\vec x}$ with $\vec x := (s_1,\dotsc,s_n)$ is reflective on~$R$.
  \end{theorem}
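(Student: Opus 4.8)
The plan is to build $G_R$ so that each reflection condition turns into a best-response condition, exactly in the spirit of Theorem~\ref{theorem:nash-equilibrium-condition}. Partition the $m = n(2B_R+1)$ players into $n$ blocks, one per query. In the block for $(M_i,p_i)$ I single out a \emph{principal} player whose mixing probability I will read off as $x_i = s_i$. I give the principal two actions, with action~$1$ paying the expected output of a \emph{simulation} of $M_i$ and action~$0$ paying the constant $p_i$. If the simulation faithfully reproduces $\Prob(M_i^{O_{\vec x}}() = 1)$, then $\Exp[\text{payoff of action }1] = \Prob(M_i^{O_{\vec x}}()=1)$, and the principal's best-response condition becomes precisely ``$\Prob(M_i^{O_{\vec x}}()=1) > p_i \Rightarrow x_i = 1$'' and ``$< p_i \Rightarrow x_i = 0$'', i.e.\ reflectivity of $O_{\vec x}$ on $(M_i,p_i)$.

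The work is in making the simulation faithful. Since $M_i^O()$ may call the oracle up to $B_R$ times, $\Prob(M_i^{O_{\vec x}}()=1)$ is a polynomial in $\vec x$ of degree up to $B_R$, whereas expected payoffs in a normal-form game are multilinear. To realize such a polynomial I need up to $B_R$ \emph{independent} $\mathrm{Bernoulli}(x_j)$ samples for each block $j$, supplied by distinct players: the principal alone will not do, since its realized action is a single bit and reusing it would correlate oracle answers that must be independent. So I attach $B_R$ fresh \emph{copy} players to each block and answer the $\ell$-th call to query $j$ made during the simulation with the realized action of the $\ell$-th copy of block $j$. Closedness guarantees every oracle call lands in some block and boundedness guarantees $\ell \le B_R$, so the copies always suffice and distinct occurrences draw on distinct, hence independent, copies; the multilinear extension over the copies' strategies then yields exactly $\Prob(M_i^{O_{\vec x}}()=1)$, provided each copy of block $j$ plays $x_j$.

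The crux is forcing each copy to play $x_j$ at \emph{every} Nash equilibrium, including the tie case $\Prob(M_j^{O_{\vec x}}()=1)=p_j$ in which the principal mixes. A single auxiliary player cannot be pinned to an interior target by its own incentives, because its expected payoff is linear in its own probability and indifference leaves it completely free. I get around this with a two-player zero-sum gadget: alongside each copy I add a \emph{partner} and, writing $a_0,a_c,a_d\in\{0,1\}$ for the realized actions of the principal, the partner, and the copy, I give the partner the payoff $\mathbf 1\{a_c=a_0\}-\mathbf 1\{a_c=a_d\}$ and the copy its negation. A short best-response analysis of this generalized matching-pennies gadget shows its unique equilibrium is $s_c = 1/2$ and $s_d = x_j$ for every $x_j\in[0,1]$, so the copy is pinned to the principal's strategy while the principal's own payoff is left untouched. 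Each pinned copy thus costs one partner, giving $2B_R$ auxiliary players per block and $n(2B_R+1)=m$ in total.

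Assembling these pieces, at an arbitrary Nash equilibrium the pinning is forced, so every simulation equals $\Prob(M_i^{O_{\vec x}}()=1)$ and the principal's best-response condition delivers reflectivity on each $(M_i,p_i)$; hence $O_{\vec x}$ is reflective on $R$. Nash's theorem supplies at least one equilibrium, making the statement non-vacuous and re-deriving Theorem~\ref{theorem:existence}(i) for such $R$. The step I expect to be the main obstacle is exactly this pinning in the tie case: engineering a gadget that forces an auxiliary player to an \emph{endogenous interior} value and verifying it does so at \emph{every} equilibrium without disturbing the principal's reflection incentive. By comparison, checking that the multilinear expectation of the simulation payoff reproduces the degree-$B_R$ polynomial is routine given closedness and boundedness.
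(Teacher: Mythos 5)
Your proposal is correct and follows essentially the same construction as the paper: one main (principal) player per query whose action~$1$ pays the simulated value of $\Prob(M_i^{O_{\vec x}}()=1)$ and whose action~$0$ pays $p_i$, plus $B_R$ copy players per query to supply independent Bernoulli samples, each pinned to the main player's mixing probability by a matching-pennies-style gadget with one auxiliary partner. The only differences are cosmetic---your pinning gadget uses a zero-sum additive payoff where the paper uses a non-zero-sum variant (and your claim that $s_c=1/2$ is the \emph{unique} equilibrium of the gadget is slightly overstated in the corner cases $x_j\in\{0,1\}$, though the needed conclusion $s_d=x_j$ still holds at every equilibrium), and you describe the main player's payoff as a direct simulation rather than via the paper's explicit polynomial expansion.
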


  \begin{proof}
    We divide the $n\cdot(2B_R + 1)$ players in our game into three sets: the \emph{main players} $i = 1,\dotsc,n$, the \emph{copy players} $g(i,j) := j\cdot n + i$, and the \emph{auxiliary players} $h(i,j) := (B_R + j)\cdot n + i$, for $i = 1,\dots,n$, $j = 1,\dotsc,B_R$.
    
    The mixed strategy~$s_i$ of a main player~$i$ will determine the probability that $O_{\vec x}(M_i,p_i) = 1$. We will force $s_{g(i,j)} = s_i$, i.e., we will force the mixed strategy of each copy player to equal that of the corresponding main player; thus, the copy players will provide us with independent samples from the $\mathrm{Bernoulli(s_i)}$ distribution, allowing us to simulate up to $B_R$ independent calls to $O(M_i,p_i)$. Finally, the auxiliary players are used to enforce the constraint $s_{g(i,j)} = s_i$, by having the copy player $g(i,j)$ play a variant of Matching Pennies against the auxiliary player $h(i,j)$.
    
    In order to define the game's payoff function, note first that by writing out each possible way that the at most $B_R$ oracle calls of $M_i^{O_{\vec x}}()$ might come out, we can write the probability that this machine returns~$1$ as a polynomial,
    $$\Prob(M_i^{O_{\vec x}}() = 1) \;=\; \sum_{k=1}^{K} c_{i,k} \prod_{i'=1}^n x_{i'}^{d_{i,k,i'}}$$
    where $d_{i,k,i'}\le B_R$. We want to force the main player~$i$ to choose pure strategy~$1$ if this probability is strictly greater than~$p_i$, pure strategy~$0$ if it is strictly smaller.
    
    To do so, we set player~$i$'s payoff function~$u_i(\vec a)$ to
    $$u_i(\vec a) \;=\; \begin{cases}\sum_{k=1}^K f_{i,k}(\vec a), & \text{if } a_i = 1, \\ p_i, & \text{otherwise}\end{cases}$$
    where
    $$f_{i, k}(\vec{a}) 
    \;=\; \begin{cases}
         c_{i, k}  & \text{if } a_{g(i', j)} = 1 ~ \forall 1 \leq i' \leq n, 1 \leq j \leq d_{i, k, i'}\\
         0 & \text{otherwise}\end{cases}$$
    Then, assuming we can guarantee $s_{g(i,j)} = s_i$, the expected payoff of strategy~$1$ to player~$i$ is exactly $\Prob(M_i^{O_{\vec x}}() = 1)$, while the payoff of strategy~$0$ is always $p_i$; hence, as desired, the Nash equilibrium conditions force~$i$ to choose~$1$ if the probability is greater than~$p_i$, $0$ if it is smaller.
    
    It remains to choose the payoffs $(u_{g(i,j)}(\vec a), u_{h(i,j)}(\vec a))$ of the copy and auxiliary players. In order to force $s_{g(i,j)} = s_i$, we set these payoffs as follows:
    \begin{center}
      \begin{tabular}{| c | c | c|}
        \hline 
        \multicolumn{3}{|c|}{$a_{i} = 0$}
        \\
        \hline
        & $a_{h(i, j)} = 0$ & $a_{h(i, j)} = 1$ \\
        \hline
        $a_{g(i, j)} = 0$ & $(1, 0)$ & $(0, 0)$ \\
        \hline
        $a_{g(i, j)} = 1$ & $(0, 1)$ & $(1, 0)$ \\
        \hline
      \end{tabular}
      \\
      \vspace{2em}
      \begin{tabular}{| c | c | c|}
        \hline 
        \multicolumn{3}{|c|}{$a_{i} = 1$}
        \\
        \hline
        & $a_{h(i, j)} = 0$ & $a_{h(i, j)} = 1$ \\
        \hline
        $a_{g(i, j)} = 0$ & $(1, 0)$ & $(0, 1)$ \\
        \hline
        $a_{g(i, j)} = 1$ & $(0, 0)$ & $(1, 0)$ \\
        \hline
      \end{tabular}
    \end{center}
    
    We show in Appendix~\ref{appendix:matching-pennies} that at Nash equilibrium, these payoffs force~$s_{g(i,j)} = s_i$.
\end{proof}

    Theorem~\ref{theorem:existence-from-nash-equilibria} is a special case of Theorem~\ref{theorem:existence}(i). The proof can be adapted to also show an analog of Theorem~\ref{theorem:existence}(ii), but we omit the details here.

\section{Related Work}

Joyce and Gibbard \cite{Joyce:1998} describe one justification for mixed Nash equilibria in terms of causal decision theory.  Specifically, they discuss a \emph{self-ratification} condition that extends CDT to cases when one's action is evidence of different underlying conditions that might change which actions are rational.  An action self-ratifies if and only if it causally maximizes expected utility in a world model that has been updated on the evidence that this action is taken. 

For example, consider the setting of a matching pennies game where players can predict each other accurately.  The fact that player A plays ``heads'' is evidence that player B will predict that player A will play ``heads'' and play ``tails'' in response, so player A would then have preferred to play ``tails'', and so the ``heads'' action would fail to self-ratify.  However, the mixed strategy of flipping the coin \emph{would} self-ratify.  Our reflection principle encodes some global constraints on players' mixed strategies that are similar to self-ratification.

The question of how to model agents as an ordinary part of the environment is of interest in the speculative study of human-level and smarter-than-human artificial intelligence \cite{Orseau:2012,Soares:2014}. Although such systems are still firmly in the domain of futurism, there has been a recent wave of interest in foundational research aimed at understanding their behavior, in order to ensure that they will behave as intended if and when they are developed~\cite{FLI:2015,Bostrom:2014,Soares:2014}.

Theoretical models of smarter-than-human intelligence such as Hutter's universally intelligent agent AIXI~\cite{Hutter:2005} typically treat the agent as separate from the environment, communicating only through well-defined input and output channels. In the real world, agents run on hardware that is part of the environment, and Orseau and Ring~\cite{Orseau:2012} have proposed formalisms for studying \emph{space-time embedded intelligence} running on hardware that is embedded in its environment. Our formalism might be useful for studying idealized models of agents embedded in their environment: While real agents must be boundedly rational, the ability to study perfectly Bayesian space-time embedded intelligence might help to clarify which aspects of realistic systems are due to bounded rationality, and which are due to the fact that real agents aren't cleanly separated from their environment.

\section{Conclusions and Future Work}

In this paper, we have introduced \emph{reflective oracles}, a type of probabilistic oracle which is able to answer questions about the behavior of oracle machines with access to the same oracle. We've shown that such oracle machines can implement a version of causal decision theory, and used this to establish a close relationship between reflective oracles and Nash equilibria.

We have focused on answering queries about oracle machines that halt with probability~1, but the reflection principle presented in Section~\ref{section:reflective-oracles} can be modified to apply to machines that do not necessarily halt. To do so, we replace the condition
$$\Prob(M^O() = 1) \;<\; p \;\;\Longrightarrow\;\; \Prob(O(M, p) = 0) \;=\; 1$$
by the condition
$$\Prob(M^O() \neq 0) \;<\; p \;\;\Longrightarrow\;\; \Prob(O(M, p) = 0) \;=\; 1$$
This is identical to the former principle if $M^O()$ is guaranteed to halt, but provides sensible information even if there is a chance that $M^O()$ loops. Appendix~\ref{appendix:existence-proof} proves the existence of reflective oracles satisfying this stronger reflection principle.

The ability to deal with non-halting machines opens up the possibility of applying reflective oracles to simplicity priors such as Solomonoff induction~\cite{Solomonoff:1964}, which defines a probability distribution over infinite bit sequences by, roughly, choosing a random program and running it. Solomonoff induction deals with computable hypotheses, but is itself uncomputable (albeit computably approximable) because it must deal with the possibility that a randomly chosen program may go into an infinite loop after writing only a finite number of bits on its output tape. A reflective oracle version of Solomonoff induction would be able to deal with a hypothesis space consisting of arbitrary oracle machines, while itself being implementable as an oracle machine; this would make it possible to model a predictor which predicts an environment it is itself embedded in. We leave details to future work.

\printbibliography

\section*{APPENDIX}

\appendix

\section{Nash Equilibria in a Variant of Matching Pennies} \label{appendix:matching-pennies}

\begin{lemma}
Consider an $n$-player game with three distinguished players, each of which has two pure strategies: Player Row has strategies Up and Down, player
Column has strategies Left and Right, and player Matrix has strategies Front and Back. Suppose that the payoffs of (Row, Column) depend only on the strategies of these three players, as follows:
\begin{center}
\begin{tabular}{|c | c|}
  \hline $(1, 0)$ & $(0, 0)$ \\ \hline $(0, 1)$ & $(1, 0)$ \\ \hline
\end{tabular}
\hspace{2em}
\begin{tabular}{|c | c|}
  \hline $(1, 0)$ & $(0, 1)$ \\ \hline $(0, 0)$ & $(1, 0)$ \\ \hline
\end{tabular}
\end{center}
where the first matrix indicates the payoffs when Matrix plays Front, and the second matrix indicates the payoffs when Matrix plays Back.

Write $p$ for the probability that Row plays Down, and
$q$ for the probability that Matrix plays Back. At Nash
equilibrium, we have $p = q$.
\end{lemma}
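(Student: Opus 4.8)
The plan is to treat Matrix's mixing probability $q$ as an exogenous parameter and show that the mutual best-response conditions for Row and Column \emph{alone} force $p = q$; Matrix's own payoffs never enter the argument, which is why the lemma does not specify them. First I would introduce a third variable $r$ for the probability that Column plays Right (so $1-r$ is the probability of Left), so that the three relevant mixed strategies are captured by $p$, $q$, and $r$.

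Next I would compute Row's expected payoffs from the first coordinate of each pair. The key observation is that Row's payoff entries are \emph{identical} in the Front and Back matrices: Row receives $1$ on (Up, Left) and (Down, Right) and $0$ otherwise, independently of Matrix's choice. Hence Row faces an ordinary Matching-Pennies choice against Column, with Up yielding expected payoff $1-r$ and Down yielding $r$. This gives the best-response correspondence $r < 1/2 \Rightarrow p = 0$, $r > 1/2 \Rightarrow p = 1$, with $r = 1/2$ leaving $p$ unconstrained.

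Then I would compute Column's expected payoffs from the second coordinate, where Matrix's strategy does matter. Column earns $1$ only on (Front, Down, Left) when playing Left and only on (Back, Up, Right) when playing Right, so the expected payoffs are $(1-q)p$ for Left and $q(1-p)$ for Right. Their difference is exactly $p - q$, yielding the correspondence $p > q \Rightarrow r = 0$, $p < q \Rightarrow r = 1$, with $p = q$ leaving $r$ free.

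Finally I would close by contradiction. If $p > q$, Column's condition forces $r = 0 < 1/2$, whence Row's condition forces $p = 0$, contradicting $p > q \ge 0$. Symmetrically, $p < q$ forces $r = 1 > 1/2$ and hence $p = 1$, contradicting $p < q \le 1$. Therefore $p = q$. I do not expect a genuine obstacle: the argument is a short case analysis once the two correspondences are in hand. The one point that requires care is verifying that Row's payoffs really are independent of Matrix --- this is the single fact that makes the two correspondences chain together cleanly --- and checking the sign of each difference so that both cases collide with the boundary constraints $q \ge 0$ and $q \le 1$.
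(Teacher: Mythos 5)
Your proof is correct and rests on the same two facts as the paper's: Row's payoffs are independent of Matrix's strategy, and Column's payoff difference between Left and Right equals $p-q$, so Column is indifferent exactly when $p=q$. The paper organizes this as a case split on $q\in\{0\}$, $(0,1)$, $\{1\}$ and extracts $p=q$ from the indifference condition $(1-q)p=(1-p)q$, whereas you argue by contradiction on the sign of $p-q$; these are the same argument read in opposite directions, and your write-up is if anything slightly cleaner.
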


\begin{proof}
\begin{itemize}
  \item
Case 1: $0 < q < 1$.

Suppose that there is a Nash equilibrium where Column plays Left. Then
Row would play Up, but then Column would strictly prefer Right,
which is a contradiction.

Suppose that there is a Nash equilibrium where Column plays Right. Then
Row would play Down, but then Column would strictly prefer Left,
which is a contradiction.

Thus, at every Nash equilibrium, Column must mix between strategies.
Hence, at equilibrium, Column must be indifferent between Left and
Right.
This is equivalent to $p(1-q) = (1-p)q$.
This implies $p>0$, since otherwise we'd have $0(1-q) = (1-0)q$,
i.e. $0 = q$, but we assumed $0 < q < 1$.
Thus, we can divide the equation by $pq$, yielding:
\begin{align*}
  &(1-q)/q = (1-p)/p \\ \Leftrightarrow\; &1/x - 1 = 1/p - 1
  \\ \Leftrightarrow\; &1/q = 1/p \\ \Leftrightarrow\; &q = p
\end{align*}
\item
    Case 2: $q = 0$.

    This gives us the following payoff matrix:

    \begin{center}
\begin{tabular}{|c | c|}
  \hline $(1, 0)$ & $(0, 0)$ \\ \hline $(0, 1)$ & $(1, 0)$ \\ \hline
\end{tabular}
\end{center}

Suppose that there is a Nash equilibrium with $p>0$. Then at this
equilibrium, Column must play Left; but if Column plays Left, then Row
strictly prefers Up, which contradicts $p>0$. Hence, we must have $p =
0 = q$.

\item
Case 3: $q = 1$.

This gives us the following payoff matrix:

\begin{center}
\begin{tabular}{|c | c|}
  \hline $(1, 0)$ & $(0, 1)$ \\ \hline $(0, 0)$ & $(1, 0)$ \\ \hline
\end{tabular}
\end{center}

Suppose that there is a Nash equilibrium with $p<1$. Then at this
equilibrium, Column must play Right; but if Column plays Right, then
Row strictly prefers Down, which contradicts $p<1$. Hence, we must have
$p = 1 = q$.

\end{itemize}
\end{proof}

\section{Proof of the Existence Theorem} \label{appendix:existence-proof}

In this appendix, we prove Theorem~\ref{theorem:existence}(ii). Thus, suppose that~$R$ is a set of queries and~$O$ is some oracle; we want to show the existence of an oracle~$O'$ which is reflective on~$R$ and satisfies $\Prob(O'(M,p) = 1) = \Prob(O(M,p) = 1)$ for all $(M,p)\notin R$.

\newcommand{\query}{\mathrm{query}}
\newcommand{\eval}{\mathrm{eval}}
\newcommand{\cM}{\mathcal{M}}
\newcommand{\NN}{\mathbb{N}}
\newcommand{\QQ}{\mathbb{Q}}
\newcommand{\RR}{\mathbb{R}}
\newcommand{\Pow}{\mathrm{Pow}}

We will describe the behavior of~$O'$ by a pair of functions,
$\query : \mathcal{M}\times([0,1]\cap\mathbb{Q})\to[0,1]$ and $\eval : \cM\to[0,1]$. The first of these gives the distribution of $O'$, i.e., $\query(M,p) = \Prob(O'(M,p) = 1)$. The second gives the distribution of a machine's behavior under~$O'$: If~$M$ almost surely returns either $0$ or $1$, then $\eval(M) = \Prob(M^{O'}() = 1)$.

Function pairs $(\query,\eval)$ can be seen as elements of $A := [0,1]^{\cM\times([0,1]\cap\QQ)}\times[0,1]^\cM$, which is a convex and compact subset of the locally convex topological vector space $\RR^{\cM\times([0,1]\cap\QQ)}\times\RR^\cM$ (with the product topology). We now define a correspondence $f : A\to\Pow(A)$, such that fixed points $(\query,\eval)\in f(\query,\eval)$ yield oracles $O'$ of the desired form.

We define $f$ by giving a set of necessary and sufficient conditions for $(\query',\eval')\in f(\query,\eval)$. We place three conditions on $\query'(M,p)$: If $(M,p)\in R$ and $\eval(M) > p$, then $\query'(M,p) = 1$; if $(M,p)\in R$ and $\eval(M) < p$, then $\query'(M,p) = 0$; and if $(M,p)\notin R$, then $\query'(M,p) = \Prob(O(M,p) = 1)$.

To describe the conditions on $\eval'(M)$, we will consider the definition of ``probabilistic oracle machine'' to include the initial state of the machine's working tapes, so that we can view the state of a machine $M^O()$ after one step of computation as a new machine $N^O()$. Then, any machine $M$ can be classified as performing one of the following operations as its first step of computation: (i)~a deterministic computation step, yielding a new state $N$, in which case $\eval'(M) = \eval(N)$; (ii)~a coin flip, yielding a state~$N$ with a rational probability~$p$ and another state~$N'$ with probability~$1-p$, in which case $\eval'(M) = p\cdot\eval(N) + (1-p)\cdot\eval(N')$; (iii)~halting, with the output tape containing~$0$ (in which case $\eval'(M) = 0$) or~$1$ (in which case $\eval'(M) = 1$) or some other output (in which case $\eval'(M)$ is arbitrary); or (iv)~an invocation of the oracle on a pair $(M',p)$, yielding a new state~$N$ if the oracle returns~$0$ and a different new state~$N'$ if it returns~$1$. In the last case, writing $q := \query(M',p)$, the condition is $\eval'(M) = (1-q)\cdot\eval(N) + q\cdot\eval(N')$.

Given a fixed point $(\query,\eval)\in f(\query,\eval)$, define $O'$ by $\Prob(O'(M,p) = 1) = \query(M,p)$. Then, it can be shown by induction that for every $T\in\NN$ and every $M\in\cM$, $\eval(M)$ is $\ge$ the probability that $M^{O'}()$ returns $1$ after at most $T$ timesteps, and $\le$ the probability that it returns something other than $0$ within this time bound; in the limit, we obtain
$$\Prob(M^{O'}() = 1) \;\le\; \eval(M) \;\le\; \Prob(M^{O'}() \neq 0) 
$$
Together with the conditions on $\query(M,p)$, this shows that
$$\begin{aligned}
\Prob(M^{O'}() = 1) > p \implies \Prob(O'(M,p) = 1) = 1 \\
\Prob(M^{O'}() = 0) > (1-p) \implies \Prob(O'(M,p) = 0) = 1
\end{aligned}$$
which is a strengthening of the conditions of Section~\ref{section:reflective-oracles}: it is equivalent in the case where $M^{O'}()$ halts with probability~1, but provides information even if $M^{O'}()$ may fail to halt.

It remains to be shown that $f(\cdot)$ has a fixed point. To do so, we employ the infinite-dimensional generalization of Kakutani's fixed-point theorem~\cite{Fan:1952}.

It is clear from the definition that $f(\query,\eval)$ is non-empty, closed and convex for all $(\query,\eval)\in A$. Hence, to show that~$f$ has a fixed point, it is sufficient to show that it has closed graph.

Thus, assume that we have sequences $(\query_n,\eval_n)\to (\query,\eval)$ and $(\query'_n,\eval'_n)\to(\query',\eval')$, such that $(\query'_n,\eval'_n)\in f(\query_n,\eval_n)$ for every~$n$; we need to show that then, $(\query',\eval')\in f(\query,\eval)$.

For the conditions on $\eval'$, we can simply take the limit $n\to\infty$ on both sides of each equation. The condition on $\query'(M,p)$ for $(M,p)\notin R$ is clearly fulfilled, since $\query'_n(M,p)$ is constant in this case. The two remaining conditions on $\query'(M,p)$ are entirely symmetrical; without loss of generality, consider the case $\eval(M) > p$, $(M,p)\in R$.

In this case, since $(\query_n,\eval_n)\to(\query,\eval)$ and convergence is pointwise, there must be an $n_0$ such that $\eval_n(M) > p$ for all $n\ge n_0$. Since $(\query'_n,\eval'_n)\in f(\query_n,\eval_n)$, it follows that $\query'_n(M,p) = 1$ for all $n\ge n_0$, whence $\query'(M,p) = 1$ as desired. This completes the proof.

\end{document}